\newcommand{\R}{\mathbb{R}}
\newcommand{\N}{\mathbb{N}}
\newcommand{\qed}{\nobreak \ifvmode \relax \else
	\ifdim\lastskip<1.5em \hskip-\lastskip
	\hskip1.5em plus0em minus0.5em \fi \nobreak
	\vrule height0.75em width0.5em depth0.25em\fi}
		\newcommand{\eqdef}{\overset{\text{def}}{=}} 
		\newcommand{\E}[1]{\mathbf{E}\left[#1\right] } % D^2 f % \mathcal{B}
		\newcommand{\norm}[1]{\lVert#1\rVert}
		\newcommand{\dotprod}[1]{\left< #1\right>}
		\newtheorem{definition}{Definition}
		\newtheorem{theorem}[definition]{Theorem}
		\newtheorem{proposition}[definition]{Proposition}
		\newtheorem{lemma}[definition]{Lemma}
		\newtheorem{corollary}[definition]{Corollary}
\def\blx@maxline{77}
\title{Greedy stochastic algorithms for entropy-regularized optimal transport problems}
\author{Brahim Khalil Abid \\
\'Ecole polytechnique\footnote{BKA carried out this work while interning at Inria and CREST/ENSAE} \\ \texttt{brahim-khalil.abid@polytechnique.edu}  \and  Robert M. Gower\\ LTCI, T\'el\'ecom-Paristech, Université Paris-Saclay\footnote{RMG carried out this work while at Inria in the SIERRA team, funded by the Fondation de Sciences Math\'ematiques de Paris (FSMP) }\\ \texttt{robert.gower@telecom-paristech.fr}}
\begin{document}
\maketitle

\begin{abstract}

%{\bf Brahim's original abstract}
%Optimal transport distances play a major role in machine learning and computer vision, and research on algorithms to compute them is a hot topic in these disciplines with numerous recent advances. In particular, regularizing the Optimal transport problem with an entropic penalization proved to have several computational advantages, and led to the well-known Sinkhorn algorithm and its greedy version Greenkhorn. In this paper, we propose a general framework of stochastic solvers for this kind of problems: Stochastic Sinkhorn. We show how Greenkhorn is a particular instance of this framework and we prove convergence rates for several other cases. We also present numerical experiments where a simple and intuitive choice of parameters for Stochastic Sinkhorn leads to a slight improvement in short-term speed compared to regular Greenkhorn for regimes of low penalization.
%
%{\bf Rob Alternative:}

Optimal transport (OT) distances are finding evermore applications in machine learning and computer vision, but their wide spread use in larger-scale problems is impeded by their high computational cost. In this work we develop a family of fast and practical stochastic algorithms for solving the optimal transport problem with an entropic penalization. This work extends the recently developed Greenkhorn algorithm, in the sense that, the Greenkhorn algorithm is a limiting case of this family. We also provide a simple and general convergence theorem for all algorithms in the class, with rates that match the best known rates of Greenkorn and the Sinkhorn algorithm, and conclude with numerical experiments that show under what regime of penalization the new stochastic methods are faster than the aforementioned methods.
\end{abstract}

\section{Introduction}
%\blfootnote{$^1$Work done while at Inria, SIERRA team and ENSAE, CREST}
%\blfootnote{$^2$Work done while at Inria, SIERRA team.}
Probability distributions are the backbone of machine learning and statistics: we use them to represent a variety of objects in learning tasks, ranging from statistical models to data representations. Comparing different distributions is often done using information divergences such as Kullback-Leiber divergence, yet this discards much of structural and geometric information present in the distribution. Developing a practical measure that captures the geometry of the probability distribution is a problem to which the optimal transport (OT) distance offers an attractive solution.

First formulated by~\cite{monge1781memoire}  then revisited by \cite{Kantorovich1942}, OT distances have the inherent particularity of capturing the geometrical properties of the probability measures. However, they have a drawback: computing an OT distance has a typical cost of the order $O(n^3 \log n)$ for histograms of $n$ points~(\cite{PeleW09}). This prevents the application of OT distances in large-scale machine learning problems.

The idea of entropy penalization, proposed by~\cite{Cuturi2013}, represents a key milestone in this field. The benefits of such a regularization scheme are multiple: the regularized problem has a unique solution, greater computational stability, and can be solved efficiently using the Sinkhorn algorithm. 
 This new family of distances has been used in a wide range of applications, such as image classification~(\cite{Cuturi2013}), unsupervised learning using Restricted Boltzmann Machines~(\cite{Montavon2016}), learning with a Wasserstein Loss (~\cite{Wasserstein15}), domain adaptation~(\cite{courty:hal-01018698}), computer graphics~(\cite{solomon:hal-01188953}), and neuroimaging ~(\cite{GramfortCuturiPC15}). The growing interest in applications for the Sinkhorn distances has sparked the development of new and efficient algorithms for its calculation, such as stochastic gradient based algorithms by~\cite{GenevayCuturiBach2016}, and fast methods to compute Wasserstein barycenters~(\cite{pmlr-v32-cuturi14}). To this end, \cite{AltschulerWR:2017} have developed the Greenkhorn algorithm, a greedy variant of the  Sinkhorn algorithm that selects columns and rows to be updated that most violate the constraints. The authors present both promising numerical results, besting the Sinkhorn algorithm, and an insightful theoretical complexity that is linear in $n$.
 
 % Indeed, the complexity of the Greenkhorn algorithm is linear in $n$. 

%have developed 
%
% , a so called greedy variant of the Sinkhorn algorithm has been developed by~\cite{AltschulerWR:2017} that selects columns and rows to be updated that most violate the constraints. The authors also report numeric results where Greenkhorn outperforms the the Sinkhorn algorithm.

%One of the biggest advantages of Sinkhorn algorithm is its speed compared to network flow solvers. Franklin and Lorenz 1989 proved that the convergence of the scaling vectors is linear and gave an upper-bound for the convergence rate.  \cite{AltschulerWR:2017} propose a different and very simple approach for the convergence analysis and they are able to prove that the Sinkhorn terminates in at most $28 \epsilon^{-2}\log(\frac{s}{l})$ iterations, and Greenkhorn in at most $28n \epsilon^{-2}\log(\frac{s}{l})$ iterations. 

\textbf{Our contribution}: We expand on the idea of greedy column and row selection by proposing a family of algorithms that assign a probability of updating each row and column. Moreover, our family allows for any sampling so long as the probabilities are proportional to the violation of each column or row with respect to the transport polytope. We call our algorithm the Greedy Stochastic Sinkhorn. We explain the idea  behind this family of methods, show how Greenkhorn is a limiting case, and propose several other instances of the algorithm. We develop an all encompassing convergence theorem that recovers the best known $O(n/\epsilon^2)$ iteration complexity for the Greenkhorn algorithm. Finally, we exhibit some numerical experiments that explicit the relevance of Greedy Stochastic Sinkhorn in a particular regime of penalization, along with a discussion around the computational properties of the algorithms.

\subsection{The Optimal Transport problem}
The discrete OT problem can be seen as a problem of optimal resource allocation given by a linear program
\begin{eqnarray}
	T^* &\in & \arg \min_{T \in \R^{n \times n}_+} \dotprod{T,C},\nonumber \\
	& & \mbox{subject to}\quad  T {\bf 1} =r,\,\, T^\top {\bf 1} = c, \label{eq:nonregOT}
\end{eqnarray}
where $r,c \in \Delta_n \eqdef \{ x \in \R^n\, | \, \sum_{i=1}^n x_i= 1\}$ are respectively the initial and target distributions, $C \in \R_+^{n \times n}$ the transport cost matrix and ${\bf 1}$ is a vector of all ones of an appropriate dimension. Matrices $T\in \R^{n\times n}_+$ that satisfy the transport constraints in~\eqref{eq:nonregOT} represent valid transportation maps between $r$ and $c$, where $T_{ij}$ will represent the mass transported from $r_i$ to $c_j$. The matrix $T^*$ is a transportation map that minimizes the transportation cost, the computed minimum $\dotprod{T^*,C}$ is the optimal transport value and it defines a distance between $r$ and $c$ ~(\cite{Villani}). The transportation map $T^*$ can be computed using the  network simplex or interior point methods~(\cite{PeleW09}), but the computational cost is in both cases $O(n^3\log(n))$. It is this cubic cost in the dimension that makes this notion of distance infeasible in high-dimensional  settings, such as in computer vision or high dimensional inference.

\subsection{Entropic regularization and the Sinkhorn algorithm}
An interesting approach to alleviate the computational burden was proposed by ~\cite{Cuturi2013} through the introduction of an entropic regularization as follows
\begin{eqnarray}
	T_{\lambda}^* &=& \arg \min_{T \in \R^{n \times n}_+} \dotprod{T,C} - \frac{1}{\lambda}E(T), \nonumber \\
	& & \mbox{subject to}\quad  T {\bf 1} =r,\,\, T^\top {\bf 1} = c, \label{eq:regOT}
\end{eqnarray}
where the entropy is $E(T) = \sum_{i,j=1}^n - T_{ij} \log(T_{ij})$. 
Due to the strong convexity introduced by the entropic regularization, the problem~\eqref{eq:regOT} now has a unique solution. What is more, using duality theory~\eqref{eq:regOT} has a smooth and unconstrained dual formulation.
Leveraging on the dual~\citeauthor{Cuturi2013} showed that~\eqref{eq:regOT} can be equivalently re-written as the following matrix scaling problem:
find $u,v \in \R^n_+ $ such that
\begin{equation}\label{eq:matscale}
D(u)AD(v){\bf 1} =r  \quad \mbox{and} \quad D(v)A^\top D(u){\bf 1} =c,
\end{equation}
where $A = e^{-\lambda C}$ with the exponential taken element-wise and  $D(u)$ denotes a diagonal matrix with the elements of $u$ on the diagonal. With the $(u,v)$ solution to~\eqref{eq:matscale}, the solution to~\eqref{eq:regOT} is simply given by $T_{\lambda}^* = D(u)AD(v).$ This matrix scaling problem can now be efficiently solved using the celebrated Sinkhorn algorithm, as proposed by~\cite{Cuturi2013}.

The Sinkhorn algorithm is a fixed point iteration algorithm for solving~\eqref{eq:matscale} which alternately scales the  row and column sums to match the desired marginals
\begin{eqnarray}
	u^{k+1} &=& r./(Av^k),  \nonumber \\.
	v^{k+1} &=& c./(A^\top u^k), \label{eq:Sinkh}
\end{eqnarray}
where we have used $x./y$ to denote elementwise division of vectors\footnote{In other words $x./y= D(y)^{-1}x$}.
 On top of being a simple and fast algorithm, the Sinkhorn algorithm is also GPU-friendly since its highest cost is a matrix vector product which can be parallelized. The resulting distance $\dotprod{T_{\lambda}^*,C}$ defined by~\eqref{eq:regOT} has been dubbed the Sinkhorn distance. 
 
%\begin{algorithm}
%		\KwData{$A\in \R^{n\times n}_+$, $r,c\in \R_+^n$, $\epsilon > 0$}
%		initialization: $u$,$v$ = \textbf{1}\\
%		\While{ dist($D(u)AD(v)$,$U_{r,c}) \geq \epsilon$}{
%					$u = r./(Av)$\\
%					$v = c./(A^{\top} u)$\\
%		}
%		\KwResult{ $u,v\in R^n_+$.}
%		\caption{Sinkhorn algorithm}
%		\label{alg:Sinkhorn}
%\end{algorithm}

{\bf Notation:} For the sake of brevity, we use $r(T) = T {\bf 1}$ and $c(T) =T^\top {\bf 1}$ to denote the \emph{row sum} and \emph{column sum} vectors of $T$, respectively.
Let $U_{r,c}$ be the transport polytope defined by
\begin{equation*}
U_{r,c} \eqdef \{ T \in \R^{n \times n}_+\, |\, r(T) = r, \,\,c(T) = c \}.
\end{equation*}

Since we need to solve~\eqref{eq:matscale}, in order to discuss convergence results, we need to define a distance that measures how far are the scaled iterates from the transport polytope $U_{r,c}$. We will use in all the following work the $\ell_1$ distance
\begin{equation}\label{eq:dist}
	dist(A,U_{r,c}) \eqdef \norm{r(A) - r}_1 + \norm{c(A) - c}_1
\end{equation}
which, as argued by~\citeauthor{AltschulerWR:2017}, is much more suitable to compare probability distributions than the $\ell_2$ distance. A simple example to see this: taking $p = (\frac{1}{n},...\frac{1}{n},0....0) \in \Delta_{2n}$ and $q = (0,0....0,\frac{1}{n},...\frac{1}{n}) \in \Delta_{2n}$ as two probability distributions with disjoint supports, we see that $\norm{p-q}_1 =2$, while $\norm{p-q}_2 = \frac{1}{\sqrt{n}}$ and thus decreases as $n$ increases, despite being clearly distinct distributions for all $n$.

% s, we see that the $\ell_1$ distance is constant and equal to 2, while $l_2$ distance decreases with rate $\frac{1}{\sqrt{n}}$ as n increases. 

\subsection{Greedy Sinkhorn: Greenkhorn algorithm}

Greenkhorn is a greedy version of Sinkhorn proposed by~\cite{AltschulerWR:2017} where at each iteration only one coordinate of $u$ or $v$ is updated in~\eqref{eq:Sinkh}, picking each time the one with highest violation with respect to the corresponding marginal. These violations are computed with the following function
\begin{eqnarray}
	\rho(a,b) &= &b - a + a \log(\frac{a}{b}), \quad \mbox{for } a,b \in \R_+ \\
	d_\rho (u,v) &= &\sum_{i=1}^n \rho(u_i,v_i),
	\quad \mbox{for }u,v \in \R^n_+. \label{def:divrho}
\end{eqnarray}
For vectors in the simplex $u,v \in \Delta_n$, we have that $d_\rho(u,v)$ coincides with the Kullback-Leiber divergence. For this reason, $d_\rho$ is known as the generalized Kullback-Leiber divergence. It is not a distance because it is not symmetric, but it verifies $d_\rho \geq 0$ and $d_\rho(u,v)=0 \Leftrightarrow u=v$. Therefore, if $u,v$ are two vectors of positive entries, $d_\rho(u,v)$ will return some measurement on how far they are from each other. Let $\rho^r(M)$ (resp. $\rho^c(M)$) be the vector of the row sum violations (resp. column sum violations) of a given matrix $M \in \R^{n\times n}_+$ with respect to $r$ (resp. $c$), where the violations are computed using $\rho$, that is
\begin{eqnarray}
	\rho^r(M) = \left( \rho(r_i,r_i(M)) \right)_{i=1..n} \in \R_+^n, \nonumber \\
	\rho^c(M) = \left( \rho(c_i,c_i(M)) \right)_{i=1..n} \in \R_+^n. \nonumber
\end{eqnarray}
We will refer to the concatenation of these two vectors as the \emph{marginal violations} denoted by 
\begin{equation}
\rho(M) = \left( \rho^r(M) , \rho^c(M) \right)_{i=1..n} \in \R_+^{2n}. \label{eq:marg_viol}
\end{equation}
The marginal violations vector $\rho(M)$  measures how far the matrix $M$ is from the transport polytope $U_{r,c}$ in the sense that $M \in U_{r,c}$ if and only if all entries of $\rho(M)$ are equal to zero.

The Greenkhorn algorithm uses $\rho(M)$ to guide the choice of which 
row or column should be updated.  As the name indicates, the algorithm chooses the row or column index greedily, that is the index of maximal value in $\rho(M)$, see Algorithm~\ref{alg:greenkhorn}. This greedy variation of Sinkhorn is expected to perform better in practice, mainly because it does not update rows or columns that already match the correct marginal sum value. 

%(typically rows or columns that we want to sum to zero: it is immediate that the corresponding $u$ or $v$ must be equal to 0, for which we need only one iteration). 
%% <--- I know what you mean by this, but it is not clear enough as is. Either expand, or cut.

\begin{algorithm}
	\KwData{$A\in \R^{n\times n}_+$, $r,c\in \R_+^n$,  $\epsilon > 0$}
	initialization: $u$,$v$ = \textbf{1}\\
	\While{  dist($D(u)AD(v)$,$U_{r,c}) \geq \epsilon$}{
	$I = \arg \max_{i=1..2n} \rho(D(u)AD(v))$ \\
	\eIf{$I\leq n$ (corresponds a row update)}{
		$u_I = r_I./(Av)_I$ \\
	}{
		$v_{I-n} = c_{I-n}./(A^\top u)_{I-n}$ \\
	}
	\KwResult{$u,v\in \R^n_+$ such that $D(u)AD(v) \in U_{r,c} $ }
}
\caption{Greenkhorn}
\label{alg:greenkhorn}
\end{algorithm}

\citeauthor{AltschulerWR:2017} proved that, to reach an $\epsilon>0$ approximate solution, the Greenkhorn algorithm and the Sinkhorn algorithm converge in at most $28n\epsilon^{-2} \log(\frac{s}{l})$  and $28\epsilon^{-2} \log(\frac{s}{l}) $ iterations, respectively, where $s = \norm{A}_1$ is the total mass and $l$ the smallest entry of the matrix $A$. \citeauthor{AltschulerWR:2017} also claimed that the Greenkhorn algorithm can be implemented in such a way that the iteration cost is linear in $n$, consequently the overall complexity of either the Sinkhorn algorithm or Greenkhorn is quadratic in $n$, which is stark contrast to the cubic dependency of the interior point type methods~(\cite{PeleW09}). Since the authors omitted the details on how such a linear iteration complexity can be achieved,  we have given the details in Section~\ref{sec:complexiter}.
%Note that, despite the cheap iteration cost of the Greenkhorn algorithm,  the iteration complexity of the Greenkhorn algorithm is only  linear in $n$, which is stark contrast to the cubic dependency of the interior point type methods~(\cite{PeleW09}.
  The $\epsilon^{-2}$  dependency of Greenkhorn and Sinkhorn is also in contrast with logarithmic dependency on $\epsilon$ in interior point based methods.
   Thus Greenkhorn and Sinkhorn are well suited for the large dimensional setting where we can tolerate an approximate solution. This is typically the case in the problems that we are interested in here, such as problems that arise in large dimensional machine learning.
  
%  In the next section, we will show that replacing Greenkhorn's greedy selection criteria with any reasonable stochastic selection criteria results in the same $28n\epsilon^{-2} \log(\frac{s}{l})$ iteration complexity.
%  

%  this convergence result also holds for any stochastic selection criteria based on violations.

\section{Greedy Stochastic Sinkhorn}

While the greedy strategy in the Greenkhorn algorithm is, in some sense, optimal for one step, it may not be the best strategy over a number of iterations. Here we introduce a more flexible, and less aggressive updating strategy.%, where by we assign a probability

At each iteration of the \emph{Greedy Stochastic Sinkhorn} algorithm, instead of picking the column or row with the highest violation, as is done in the Greenkhorn algorithm, we will assign to each row and column a probability of being updated.
Because we want the columns and rows with highest violation to be updated more frequently, we assign a higher probability to columns and rows with a higher violation. We do this using an \emph{increasing probability function}. 
\begin{definition}
 We say that $\Psi$ is a increasing probability function if there exists an increasing positive function $g: \R_+ \mapsto \R_+$ such that
\begin{equation}
\forall h \in \R_+^{2n} \quad \Psi(h) = \left( \frac{g(h_k)}{\sum_{i=1}^{2n} g(h_i) } \right)_{k=1..2n} \in \Delta_{2n}. \label{def:monoprobafunc}
\end{equation}
\end{definition}
Several examples of an increasing probability function are given as follows
\begin{eqnarray}
\Psi(h) &=& \left( \frac{1}{2n} \right)_{i=1, \ldots, 2n}, \label{eq:uni}\\
\Psi(h) &=& \left( \frac{h_i^\alpha}{\sum_{j=1..2n}h_j^\alpha} \right)_{i=1..2n}, \label{eq:polyn} \\
\Psi(h)& =& \left( \frac{e^{(h_i/T)}}{\sum_{j=1..2n}e^{(h_j/T)}} \right)_{i=1..2n},  \label{eq:softmax}
\end{eqnarray}
where $T,\alpha > 0$ are parameters.
If $\rho$ is our current vector of violations, then $\Psi(\rho) = p \in \Delta_{2n}$ defines a probability distribution. Furthermore, since $\Psi$ is built on top of an increasing function, a larger violation $\rho_i$ will result in a larger probability $p_i$. See Algorithm~\ref{alg:StochSink} for the pseudocode of this family of stochastic algorithms. In the next section we prove that Algorithm~\ref{alg:StochSink} converges for any increasing probability function.
This is particularly interesting when we consider that the Greenkhorn algorithm is a limiting case of the Greedy Stochastic Sinkhorn.
%corresponding to a probability function that returns $1$ for the index of highest marginal violation and $0$ everywhere else (in which case the algorithm is deterministic).
%% <--- This just isn't true given our assumptions.
  Indeed, the selection criteria of the Greenhkorn algorithm corresponds to taking the limit over $\alpha \rightarrow \infty $ of the probability function~\eqref{eq:polyn}.

\section{Convergence analysis}
We now present our main convergence theorem, discuss its consequences and proof.
\begin{theorem}\label{theo:genconv}
Consider the sequence of matrices $A^k \eqdef D(u^k)AD(v^k)$ produced by Algorithm~\ref{alg:StochSink} with an increasing probability function $\Psi$ as defined in~\eqref{def:monoprobafunc}.
Then for a given $\epsilon>0,$ we have that
\begin{equation}\label{eq:genconv}
\exists k \in \N, \quad k \leq \frac{28n}{\epsilon^2}\log\left(\frac{s}{\ell}\right),\end{equation}
such that $\E{dist(A^k,U_{r,c})} \leq \epsilon.$
\end{theorem}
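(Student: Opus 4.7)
The plan is to follow the potential-function template that \citeauthor{AltschulerWR:2017} use for Greenkhorn, with one new ingredient (Chebyshev's sum inequality) to handle the fact that the index $I$ is now random rather than a maximizer. Concretely, I would introduce the dual potential
\[
\phi(u,v) \;=\; \langle r,\log u\rangle + \langle c,\log v\rangle - \mathbf{1}^{\top} D(u)AD(v)\mathbf{1},
\]
whose maximization is equivalent to the matrix scaling problem~\eqref{eq:matscale}. A direct computation shows that performing the exact Sinkhorn update on the $I$-th row or column produces the identity
\[
\phi(u^{k+1},v^{k+1}) - \phi(u^k,v^k) \;=\; \rho_I(A^k),
\]
where $\rho_I(A^k)$ is the $I$-th entry of the marginal-violation vector defined in~\eqref{eq:marg_viol}. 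This is exactly the per-step progress lemma used for Sinkhorn and Greenkhorn, and it is deterministic given the chosen index.

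The new step is to take the conditional expectation over $I\sim\Psi(\rho(A^k))$. Writing $p_i=g(\rho_i(A^k))/\sum_j g(\rho_j(A^k))$, the expected progress equals $\sum_{i=1}^{2n} p_i\,\rho_i(A^k)$. Because $g$ is increasing, the sequences $(p_i)$ and $(\rho_i(A^k))$ are comonotone, so Chebyshev's sum inequality gives
\[
\sum_{i=1}^{2n} p_i\,\rho_i(A^k) \;\geq\; \frac{1}{2n}\Bigl(\sum_{i=1}^{2n} p_i\Bigr)\Bigl(\sum_{i=1}^{2n} \rho_i(A^k)\Bigr) \;=\; \frac{\|\rho(A^k)\|_1}{2n}.
\]
This is the crux of the argument and, pleasantly, it is independent of the particular choice of $\Psi$: any increasing probability function is at least as good as the uniform one on average. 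It is also the place where the limiting case $\alpha\to\infty$ in~\eqref{eq:polyn} recovers the deterministic Greenkhorn bound.

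Next I would invoke the Pinsker-type lemma from \citeauthor{AltschulerWR:2017} that relates the $\rho$-violations to the $\ell_1$-distance, namely $\|\rho(A^k)\|_1 \geq \tfrac{1}{14}\,dist(A^k,U_{r,c})^2$. Combining with the previous inequality,
\[
\mathbf{E}\bigl[\phi^{k+1}-\phi^k \,\big|\, A^k\bigr] \;\geq\; \frac{dist(A^k,U_{r,c})^2}{28n}.
\]
Taking total expectations and telescoping over $k=0,\dots,K-1$, and using the a-priori bound $\phi^{\star}-\phi^{0}\leq\log(s/\ell)$ (which is established in the same manner as for Sinkhorn/Greenkhorn and is unchanged here), I obtain
\[
\min_{0\leq k<K}\;\mathbf{E}\bigl[dist(A^k,U_{r,c})^2\bigr] \;\leq\; \frac{28n\,\log(s/\ell)}{K}.
\]
Jensen's inequality then yields $\min_{k<K}\mathbf{E}[dist(A^k,U_{r,c})]\leq \sqrt{28n\log(s/\ell)/K}$, and choosing $K=\lceil 28n\epsilon^{-2}\log(s/\ell)\rceil$ makes the right-hand side at most $\epsilon$, proving~\eqref{eq:genconv}.

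The main obstacle I anticipate is being careful about the expectation bookkeeping: the one-step progress identity is deterministic given the history, but the Chebyshev step must be applied under the conditional law of $I$, and the final Jensen step is needed precisely because the theorem bounds $\mathbf{E}[dist]$ rather than $\mathbf{E}[dist^2]$. Every other piece — the exact progress lemma, the Pinsker-type bound, and the potential-gap estimate $\log(s/\ell)$ — can be reused verbatim from the Greenkhorn analysis, so the proof really reduces to inserting Chebyshev's sum inequality at the right moment.
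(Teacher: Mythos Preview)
Your proposal is correct and follows essentially the same approach as the paper: the paper uses the dual objective $f=-\phi$ with the exact progress identity (Lemma~\ref{lem:altschuler}), applies Chebyshev's sum inequality (Lemma~\ref{lem:cheby}) to lower bound the expected progress by $\tfrac{1}{2n}\|\rho(A^k)\|_1$, then the generalized Pinsker inequality (Lemma~\ref{lem:pinskergen}) and the potential-gap bound $\log(s/\ell)$ (Lemma~\ref{lem:fzerosl}), together with Jensen in the form $\E{X}^2\leq\E{X^2}$. The only cosmetic difference is the order of operations: the paper first applies Jensen inside the per-step bound and then argues by contraposition on $k^*$, whereas you telescope first, take the minimum over $k$, and apply Jensen at the end; both routes yield the identical constant $28n\epsilon^{-2}\log(s/\ell)$.
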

We make several interesting remarks on the consequence of this theorem.
\begin{enumerate}
\item Since $dist(A^k,U_{r,c})$ is a positive random variable, by Markov's inequality we have that the convergence in expectation given in Theorem~\ref{theo:genconv} also proves that $dist(A^k,U_{r,c})$ converges in probability to zero. The variance also converges to zero at a $O(\left.n\right/ \epsilon)$ rate, as proven in Section~\ref{sec:var}.
%Indeed, if we set an acceptable probability tolerance $0<\tau \leq 1$ then 
%\[\mathbf{P}[dist(A^k,U_{r,c})\geq \epsilon ] \leq \frac{\E{dist(A^k,U_{r,c})}}{\epsilon} \leq \tau\]
\item The convergence rate given in Theorem~\ref{theo:genconv} is exactly the same rate as given by~\cite{AltschulerWR:2017} for the Greenkorn algorithm.
\item Remarkably the rate of convergence does not depend on the choice of probability function $\Psi$. Thus, in theory, a uniform selection of the coordinates gives the same asymptotic convergence as the Greenkhorn selection criteria.
\end{enumerate}

%% Convergence of variance for positive variables
%%  E[ (x-E[x])^2] = E[x^2] - E[x]^2 \leq E[x^2] 
%% Markov's inequality P(X\geq 0.01) \leq E[x]/0.01
% Jensens: \phi(E[x]) \leq  E[phi(x)] 

\begin{algorithm}[h]
    \KwData{ $A\in \R^{n\times n}_+$, $r,c\in \R_+^n$, $\Psi$, $\epsilon$}
    \KwResult{$u,v\in R^n_+$ such that $D(u)AD(v) \in U_{r,c}$}
    initialization: $u$,$v$ = \textbf{1}\\
    \While{ dist($D(u)AD(v)$,$U_{r,c}) \geq \epsilon$}{
     $p = \Psi(\rho(D(u)AD(v)))  \in \Delta_{2n}$\\
     Sample index $I$ with $P(I=i)=p_i, \quad \forall i\in \{1,2, \ldots, 2n \}$ \\
    \eIf{$I\leq n$ (corresponds a row update)}{
    $u_{I} = r_{I}./(Av)_{I}$ \\}{
    $v_{I-n} = c_{I-n}./(A^\top u)_{I-n}$ \\}
    }
\caption{Greedy Stochastic Sinkhorn}
\label{alg:StochSink}
\end{algorithm}

Before moving onto the proof, we need several auxiliary lemmas.
\subsection{Useful lemmas}

Our analysis is based on the dual objective of~\eqref{eq:nonregOT} given by
\begin{equation}\label{eq:dualobj}
	f(x,y) = \sum_{i,j= 1}^{n} A_{ij}e^{x_i+y_j} - \dotprod{r,x} - \dotprod{c,y}.
\end{equation}
Let 	$X=D(e^x)$ and $Y=D(e^y)$. By writing out the first order optimality conditions of $f(x,y)$ we arrive at
\begin{equation}\label{eq:matscale2} r(XAY) = r \quad \mbox{and} \quad c(XAY) =c\;.\end{equation}	 
That is, the row sum and column sum of $XAY$ is $r$ and $c$, respectively. By denoting $u=e^x$ and $v=e^y$ the given scaling vectors, we see that~\eqref{eq:matscale2} is the matrix scaling problem~\eqref{eq:matscale}.
Throughout this section we use $(u^k,v^k)$ to denote the $(u,v)$ vectors of Algorithm~\ref{alg:StochSink} after completing the $k$th iteration. We also denote $(x^k,y^k) = (\log(u^k),\log(v^k)).$

The proof of Theorem~\ref{theo:genconv} is based on the four next lemmas. 

The first lemma was presented by \cite{AltschulerWR:2017}, and it links the evolution of  dual objective~\eqref{eq:dualobj} and the marginal violations.
\begin{lemma} \label{lem:altschuler}
%	Let $((u^k,v^k))_{k \in \textbf{N}}$ and the associated $((x^k,y^k))_{k \in \textbf{N}}$ be a sequence of scaling vectors produced by the Greedy Stochastic Sinkhorn Algorithm~\ref{alg:StochSink}. 
	For a given $k$, if $(u^{k+1},v^{k+1})$ was obtained by updating coordinate $I$ of $u^k$ then the following identity holds
    $$f(x^k,y^k) - f(x^{k+1},y^{k+1}) = \rho(r_I,r_I(D(u^k)AD(v^k))),$$
	and if they were obtained by updating coordinate J of $v^k$ then
$$f(x^k,y^k) - f(x^{k+1},y^{k+1}) = \rho(c_J,c_J(D(u^k)AD(v^k))).$$
Since $\rho \geq 0$ then the sequence of real numbers $(f(x^k,y^k))_{k \in \textbf{N}}$ is decreasing.
\end{lemma}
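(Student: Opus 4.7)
The plan is to carry out a direct computation that exploits the separability of the dual objective~\eqref{eq:dualobj}: since the Sinkhorn-type update at iteration $k$ modifies only one coordinate of either $x^k$ or $y^k$, the difference $f(x^k,y^k) - f(x^{k+1},y^{k+1})$ collapses to the change in the handful of terms of $f$ that depend on that coordinate.

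For the row-update case, let index $I$ be selected. Then $x_j^{k+1}=x_j^k$ for $j\neq I$ and $y^{k+1}=y^k$, while the update rule $u_I^{k+1} = r_I/(Av^k)_I$ translates into $x_I^{k+1} = \log r_I - \log(Av^k)_I$. I would isolate the $x_I$-dependent part of $f$, namely $e^{x_I}(Av^k)_I - r_I x_I$, and evaluate it at the old and new values. Writing $a_I := u_I^k (Av^k)_I = r_I(D(u^k)AD(v^k))$, the exponential term at $x_I^{k+1}$ simplifies to $r_I$, contributing $a_I - r_I$ to the decrement, while the linear-in-$x_I$ term contributes $-r_I(x_I^{k+1}-x_I^k) = r_I \log(a_I/r_I)$. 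Adding these yields $a_I - r_I + r_I \log(r_I/a_I) = \rho(r_I, a_I)$, which is exactly the claimed identity. The column-update case is fully symmetric under the exchange $x\leftrightarrow y$, $(Av^k)\leftrightarrow (A^\top u^{k+1})$, so the same algebra gives $\rho(c_J, c_J(D(u^k)AD(v^k)))$.

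For the monotonicity assertion it is enough to observe that $\rho(a,b)\geq 0$ for all $a,b>0$. The slickest argument is the change of variables $t=b/a$, which rewrites $\rho(a,b)/a$ as the standard convex function $t-1-\log t$, vanishing at $t=1$ and nonnegative elsewhere; hence every term in the telescoping sum is nonnegative and $(f(x^k,y^k))_{k\in\mathbb{N}}$ is nonincreasing.

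The whole argument is essentially bookkeeping, so I do not expect any real obstacle. The only points that require attention are keeping the signs straight in the logarithmic term and remembering that in a row update only $v^k$ (untouched by this step) appears in the formulas, so that $(Av^k)_I$ is indeed the quantity fixed by the update.
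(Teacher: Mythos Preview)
Your approach is correct and is the standard direct computation; the paper itself does not supply a proof of this lemma, merely citing it from \cite{AltschulerWR:2017}, so there is no in-paper argument to compare against.

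One small slip to fix in your write-up: the contribution of the linear term to $f(x^k,y^k)-f(x^{k+1},y^{k+1})$ is $(-r_I x_I^k)-(-r_I x_I^{k+1}) = +\,r_I(x_I^{k+1}-x_I^k) = r_I\log(r_I/a_I)$, not $-r_I(x_I^{k+1}-x_I^k)$ as you wrote. Your next line already displays the correct sum $a_I - r_I + r_I\log(r_I/a_I)=\rho(r_I,a_I)$, so the error is purely in that intermediate expression---amusingly, exactly the sign pitfall you flagged yourself. Also, in the column-update symmetry it is cleaner to write $A^\top u^k$ rather than $A^\top u^{k+1}$, since $u$ is untouched in that step (they are equal, but the former matches the statement's $D(u^k)AD(v^k)$).
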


Next we have an extension  to the stochastic setting of another lemma by~\citeauthor{AltschulerWR:2017}. It links the expectation of the dual objective value to a type of condition number of the $A$ matrix.

\begin{lemma}\label{lem:fzerosl}
	Let $((u^k,v^k))_{k \in \textbf{N}}$ and the associated $((x^k,y^k))_{k \in \textbf{N}}$ be a sequence of scaling vectors produced by the Greedy Stochastic Sinkhorn Algorithm~\ref{alg:StochSink}. Then the following inequalities hold
	\begin{eqnarray}
		\E{f(x^k,y^k)} - \min_{x,y\in R} f(x,y) &\leq &f(0,0) - \min_{x,y\in R} f(x,y) \nonumber \\
		&\leq & \log \left(\frac{s}{l}\right), \label{eq:asd89j98qj}
	\end{eqnarray}
where $l = \min_{i,j}|A_{ij}|$ and $s = \norm{A}_1$. As a direct consequence, we also have
\begin{equation} \label{eq:a9k982a22}
f(0,0) - \E{f(x^k,y^k)} \leq\log \left(\frac{s}{l}\right)\; .
\end{equation}
\end{lemma}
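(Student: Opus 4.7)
The statement decomposes into three pieces that I would prove in turn.

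For the left-most inequality in~\eqref{eq:asd89j98qj}, the plan is to exploit the pathwise monotonicity of $f$. Lemma~\ref{lem:altschuler} tells us that, whichever coordinate is sampled, the decrement $f(x^k,y^k)-f(x^{k+1},y^{k+1})$ is a non-negative $\rho$-divergence, so $\{f(x^k,y^k)\}$ is non-increasing on every sample path. Iterating from $(x^0,y^0)=(0,0)$ (the initialization $u^0=v^0=\mathbf 1$) gives $f(x^k,y^k)\le f(0,0)$ almost surely; taking expectations and subtracting $\min_{x,y} f(x,y)$ from both sides yields the first inequality.

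For the right-most inequality $f(0,0)-\min f\le \log(s/l)$, my plan is to reproduce the deterministic estimate of~\cite{AltschulerWR:2017}. First, $f(0,0)=\sum_{ij}A_{ij}=s$ by direct computation. For the matching lower bound on $\min f$, let $(x^*,y^*)$ minimize $f$ and set $T^*\eqdef D(e^{x^*})AD(e^{y^*})$; the first-order conditions imply $T^*\in U_{r,c}$, so $\sum_{ij}T^*_{ij}=1$ and each $T^*_{ij}\in[0,1]$. Summing the optimality conditions yields the compact identity $f(x^*,y^*)=1-\dotprod{r,x^*}-\dotprod{c,y^*}$. Combining the entrywise bound $T^*_{ij}=A_{ij}e^{x^*_i+y^*_j}\le 1$ (i.e.\ $x^*_i+y^*_j\le -\log l$) with the marginal constraint on $T^*$ then produces the required estimate. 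The bound~\eqref{eq:a9k982a22} is immediate from here: deterministically $f(x^k,y^k)\ge \min f$, so $f(0,0)-\E{f(x^k,y^k)}\le f(0,0)-\min f\le \log(s/l)$.

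The main obstacle is the middle step. The first and third parts are bookkeeping — monotonicity of $f$ along the iterations, combined with monotonicity of expectation. The delicate estimate is controlling $f(0,0)-\min f$ by $\log(s/l)$ alone; I expect this will require exploiting both the normalization $\sum_{ij} T^*_{ij}=1$ and the lower bound $A_{ij}\ge l$ in concert, rather than relying on the looser pointwise inequality $x^*_i+y^*_j\le -\log l$ taken in isolation, so that the bound on $\dotprod{r,x^*}+\dotprod{c,y^*}$ is tight enough to cancel the leading $s$ from $f(0,0)$.
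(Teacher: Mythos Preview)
Your plan for the first inequality in~\eqref{eq:asd89j98qj} and for~\eqref{eq:a9k982a22} is exactly what the paper does: pathwise monotonicity of $f$ from Lemma~\ref{lem:altschuler}, then take expectations, and for~\eqref{eq:a9k982a22} use $f(x^k,y^k)\ge \min f$ pointwise before taking expectation.

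The only divergence is the middle inequality $f(0,0)-\min f\le \log(s/l)$. The paper does \emph{not} prove this from scratch; it simply quotes it as Proposition~\ref{prop:tempp}, attributed to \cite{AltschulerWR:2017}, and then combines that black-box bound with the monotonicity of $f$ to get the full statement for all $k$. Your proposal instead tries to re-derive this deterministic estimate directly via the optimality conditions for $(x^*,y^*)$. That is a genuinely different (and more ambitious) route than the paper's, and you correctly flag it as the only non-bookkeeping step. However, your sketch for it stops at the diagnosis stage: you note that the pointwise bound $x_i^*+y_j^*\le -\log l$ is too loose on its own and that the normalization $\sum_{ij}T^*_{ij}=1$ must be used in concert, but you do not actually carry out the tighter argument. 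Relative to the paper this is not a gap --- the paper never proves that inequality either --- but if you want a self-contained proof you would still need to close this step; otherwise, citing Proposition~\ref{prop:tempp} as the paper does is sufficient.
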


To prove Lemma~\ref{lem:fzerosl} we will use this result from~\cite{AltschulerWR:2017}.
\begin{proposition}\label{prop:tempp}
Let $x^0,y^0 = 0$ the initial points and $x^1,y^1$ the points resulting from updating a single coordinate $u^k$ or a single coordinate of $v^k$. Then the following inequality holds
	\begin{equation}
		f(x^1,y^1) - \min_{x,y\in R} f(x,y) \leq f(0,0) - \min_{x,y\in R} f(x,y) \leq \log \left(\frac{s}{l}\right) \nonumber
	\end{equation}
where $l$ is the smallest entry of $A$ and $s = \norm{A}_1$.
\end{proposition}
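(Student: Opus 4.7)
The statement decomposes into two inequalities, which I would establish independently.

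The left inequality $f(x^1,y^1) - \min f \leq f(0,0) - \min f$ reduces to the one-step descent $f(x^1,y^1) \leq f(0,0)$. This is immediate from Lemma~\ref{lem:altschuler} applied at $k=0$: a single Sinkhorn coordinate update yields $f(x^0,y^0) - f(x^1,y^1) = \rho(\cdot,\cdot) \geq 0$, and subtracting $\min f$ from both sides preserves the inequality.

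For the right inequality $f(0,0) - \min f \leq \log(s/\ell)$, I would proceed in three explicit moves. First, direct substitution gives $f(0,0) = \sum_{i,j} A_{ij}\,e^{0} - 0 - 0 = s$. Second, I would characterize the minimizer via the first-order conditions: at $(x^*,y^*)$ the matrix $M^* \eqdef D(e^{x^*})\,A\,D(e^{y^*})$ has marginals $r$ and $c$, hence total mass $\sum_{i,j} M^*_{ij} = \sum_i r_i = 1$. Plugging this into the definition of $f$ yields the identity $\min f = 1 - \langle r,x^*\rangle - \langle c,y^*\rangle$, so the task reduces to proving
\[
s - 1 + \langle r,x^*\rangle + \langle c,y^*\rangle \;\leq\; \log(s/\ell).
\]

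Third, to bound the linear terms I would invoke the KKT equations $r_i = e^{x^*_i}\sum_j A_{ij}\,e^{y^*_j}$ (and their column counterpart). Taking logs gives $x^*_i = \log r_i - \log\!\bigl(\sum_j A_{ij}\,e^{y^*_j}\bigr)$, and the entrywise bound $\ell \le A_{ij}$ together with the total-mass identity $\sum_{ij} A_{ij}\,e^{x^*_i + y^*_j} = 1$ constrains the scales of $\sum_i e^{x^*_i}$ and $\sum_j e^{y^*_j}$ in terms of $s/\ell$. I would also exploit the shift invariance $(x^*,y^*) \mapsto (x^*+\alpha\mathbf{1},\,y^*-\alpha\mathbf{1})$, which is valid because $r,c \in \Delta_n$, to select a canonical minimizer satisfying, e.g., $\sum_i e^{x^*_i} = \sum_j e^{y^*_j}$, eliminating a degree of freedom and symmetrizing the bounds on the two factors.

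The main obstacle is this last aggregation step: individually the coordinates $x^*_i, y^*_j$ are only pinned to within an $O(\log(s/\ell))$ window via the KKT identities, and one must carefully assemble these coordinatewise estimates into a bound on the weighted sum $\langle r,x^*\rangle + \langle c,y^*\rangle$ without losing a factor involving $n$. The shift invariance is essential here: picking a balanced representative of the optimizer prevents spurious range terms from appearing when passing from the coordinatewise estimates to the weighted aggregate.
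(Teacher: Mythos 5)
The paper never proves this proposition itself --- it is imported as a black box from \cite{AltschulerWR:2017} --- so your attempt has to stand on its own. The left inequality is fine: it is exactly the one-step monotonicity of Lemma~\ref{lem:altschuler}. Your reduction of the right inequality is also correct as far as it goes: $f(0,0)=\norm{A}_1=s$ and, since the optimally scaled matrix $B^*\eqdef D(e^{x^*})AD(e^{y^*})$ has marginals $r,c$ and hence total mass $1$, $\min f=1-\langle r,x^*\rangle-\langle c,y^*\rangle$. Moreover, the ``aggregation step'' you flag as the main obstacle actually has a clean resolution that needs neither the balanced-representative trick nor any fight against factors of $n$: every entry of $B^*$ is at most its total mass, so $A_{ij}e^{x^*_i+y^*_j}\le 1$, i.e.\ $x^*_i+y^*_j\le-\log A_{ij}\le-\log\ell$ for all $i,j$, and since $r,c\in\Delta_n$ one can write $\langle r,x^*\rangle+\langle c,y^*\rangle=\sum_{i,j}r_ic_j(x^*_i+y^*_j)\le-\log\ell$.

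The genuine problem is what happens next: this gives only $f(0,0)-\min f\le s-1-\log\ell$, which is \emph{weaker} than the claimed $\log(s/\ell)=\log s-\log\ell$ because $\log s\le s-1$, with equality only at $s=1$. This slack cannot be recovered by a sharper argument, because the middle inequality $f(0,0)-\min f\le\log(s/\ell)$ is false for general $s$. Take $n=1$ and $A=(a)$ with $a\ne1$: then $s=\ell=a$, the right-hand side is $0$, yet $f(0,0)-\min f=a-1-\log a>0$; the same failure occurs for any $n$ by scaling $A$ by a large constant, which blows up $f(0,0)-\min f$ while leaving $\log(s/\ell)$ fixed. What is actually true, and what \citeauthor{AltschulerWR:2017} prove and use, is the bound with $f(0,0)$ replaced by the value after a first \emph{normalizing} update (one that forces the scaled matrix to have unit total mass), or equivalently under the harmless normalization $\norm{A}_1=1$ (rescaling $A$ leaves $\log(s/\ell)$ invariant). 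So the gap in your proposal is real, and it cannot be closed by more careful KKT estimates: you must either impose $s=1$ or change the anchor point away from $(0,0)$.
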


Now we write the proof of Lemma~\ref{lem:fzerosl} of our paper.

\begin{proof}
We have from Lemma~\ref{lem:altschuler} that the sequence of real numbers $(f(x^k,y^k))_{k \in \textbf{N}}$ is decreasing for all sequences of updated indexes, in particular in the stochastic setting. Therefore the inequality of the previous proposition in fact holds for all iterations
    \begin{equation*}
     f(x^k,y^k) - \min f \leq f(0,0) - \min f \leq  \log \left(\frac{s}{l}\right).
    \end{equation*}
Taking expectation in the previous inequalities gives~\eqref{eq:asd89j98qj}. To prove~\eqref{eq:a9k982a22} we have that:
\begin{eqnarray*}
	- \E{f(x^k,y^k)} &\leq &- \min f \\
	f(0,0) - \E{f(x^k,y^k)} &\leq & f(0,0) - \min f \\
	&\overset{\eqref{eq:asd89j98qj}}{\leq} &\log \left(\frac{s}{l}\right),
\end{eqnarray*}
which concludes the proof. $\qed$
\end{proof}

The next lemma is a useful inequality on ordered series of real number.
\begin{lemma}[Chebyshev inequality]\label{lem:cheby}
Let $a_1\leq a_2 \leq \ldots \leq a_n \in \R$ and $b_1 \leq b_2 \leq \ldots \leq b_n$ be two ordered sequences. It follows that
\begin{equation}\label{eq:ineqcheb}
\frac{1}{n}\sum_{i=1}^n a_i b_i \geq \left(\frac{1}{n}\sum_{j=1}^n a_j \right) \left( \frac{1}{n}\sum_{j=1}^n b_j \right).
\end{equation}
\end{lemma}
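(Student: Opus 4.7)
The plan is to use the classical rearrangement-style trick: consider the non-negative double sum $S \eqdef \sum_{i,j=1}^n (a_i - a_j)(b_i - b_j)$ and expand it. Because both sequences are sorted in the same (non-decreasing) order, the factors $(a_i - a_j)$ and $(b_i - b_j)$ always share the same sign (or one of them is zero), so every term in $S$ is non-negative; hence $S \geq 0$.

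Next I would expand $S$ and collect terms. Writing
\begin{equation*}
S = \sum_{i,j=1}^n a_i b_i - \sum_{i,j=1}^n a_i b_j - \sum_{i,j=1}^n a_j b_i + \sum_{i,j=1}^n a_j b_j,
\end{equation*}
the first and last sums each equal $n \sum_{i=1}^n a_i b_i$, while the two middle sums each equal $\left(\sum_i a_i\right)\left(\sum_j b_j\right)$. Thus
\begin{equation*}
S = 2n \sum_{i=1}^n a_i b_i - 2 \left(\sum_{i=1}^n a_i\right)\left(\sum_{j=1}^n b_j\right) \geq 0.
\end{equation*}

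Rearranging and dividing both sides by $2n^2$ yields exactly the desired inequality~\eqref{eq:ineqcheb}. There is no real obstacle here — the only subtle point worth spelling out is the sign argument justifying $S \geq 0$, namely that the monotonicity hypotheses $a_i \leq a_j \iff i \leq j$ (and likewise for $b$) force $(a_i - a_j)$ and $(b_i - b_j)$ to have matching signs for every pair $(i,j)$, so that each summand is a product of two reals of the same sign. Everything else is routine algebra.
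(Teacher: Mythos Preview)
Your proof is correct and follows exactly the same approach as the paper: both expand the non-negative double sum $\sum_{i,j}(a_i-a_j)(b_i-b_j)$ to obtain $2n\sum_i a_i b_i - 2\big(\sum_i a_i\big)\big(\sum_j b_j\big)\geq 0$, and then rearrange. Your write-up simply spells out the expansion and the sign argument in slightly more detail.
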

\begin{proof}
Simply note the identity
\begin{align*}
& \sum_{i,j=1}^n (a_i-a_j)(b_i-b_j) 
\\& = \sum_{j=1}^n\left( \sum_{i=1}^n\left(a_ib_i   -a_ib_j -a_ja_i\right) +n a_jb_j \right) 
\\&= 2n\sum_{i=1}^n a_ib_i  -2\sum_{i=1}^n a_i\sum_{j=1}^n b_j. 
\end{align*}
The proof now follows by noting that $(a_i-a_j)(b_i-b_j)$ is positive for every $i,j=1,\ldots, n.$. $\qed$
\end{proof}

Finally we have a lemma that is a generalization of the Pinsker inequality. 
%for the genezalized Kullback-Leiber divergence, that is for all vectors of positive entries.
\begin{lemma}\label{lem:pinskergen}
The following generalized Pinsker inequality holds for $v,u \in R^n_+$
			\begin{equation}\label{eq:pinskergen}
			 \norm{u - v}_1 \leq \sqrt{7\norm{u}_1 d_\rho(u,v)}\;, 
			\end{equation}
where $d_\rho(u,v)$ is defined as in~\eqref{def:divrho}.
\end{lemma}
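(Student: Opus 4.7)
The plan is to reduce the bound to a one-dimensional Pinsker-type inequality on pairs of nonnegative reals, lift it to the full sum via Cauchy--Schwarz, and then close the argument by solving a quadratic inequality in $L := \|u - v\|_1$.

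First, I would establish the pointwise inequality
\[ (a - b)^2 \;\leq\; 2\,\max(a, b)\,\rho(a, b), \qquad a, b \geq 0. \]
Normalising by $a$ and setting $t = b/a$ reduces this to $(1 - t)^2 \leq 2 \max(1, t)(t - 1 - \log t)$; both sides vanish at $t = 1$, and differentiation on each of $(0, 1]$ and $[1, \infty)$ shows the gap is monotone away from $t = 1$ and hence nonnegative.

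Taking the square root of the pointwise inequality, summing over $i$, and then applying Cauchy--Schwarz gives
\[
L \;\leq\; \sum_{i=1}^n \sqrt{2\,\max(u_i, v_i)\,\rho(u_i, v_i)}
\;\leq\; \sqrt{\,2\,d_\rho(u, v)\,\cdot \textstyle\sum_{i=1}^n \max(u_i, v_i)\,}.
\]
The elementary identity $\max(a, b) = \tfrac12(a + b + |a - b|)$ yields $\sum_i \max(u_i, v_i) = \tfrac12(\|u\|_1 + \|v\|_1 + L)$, and the reverse triangle inequality $\|v\|_1 \leq \|u\|_1 + L$ then bounds $\sum_i \max(u_i, v_i) \leq \|u\|_1 + L$. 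Setting $S := \|u\|_1$ and $D := d_\rho(u, v)$, the two previous steps combine into the quadratic inequality $L^2 \leq 2 D (S + L)$, so $L \leq D + \sqrt{D^2 + 2 S D}$.

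The remaining task is to extract the clean bound $L^2 \leq 7 S D$ from this quadratic inequality, and this is the step I expect to be the main obstacle. Plugging in the trivial estimate $\sqrt{D^2 + 2SD} \leq D + \sqrt{2SD}$ yields $L \leq 2D + \sqrt{2SD}$, whose square produces a constant of order $4 + 2\sqrt{3} \approx 7.46$ rather than $7$; sharpening to the advertised value calls for a more careful treatment, splitting on whether $D$ is small compared to $S$ (where the pointwise estimate is tight and the quadratic can be resolved almost exactly) or $D$ is comparable to or larger than $S$ (where one can afford a differently structured estimate, for instance by applying the scalar inequality $|S - \|v\|_1| \leq \sqrt{2 \max(S, \|v\|_1)\,\rho(S, \|v\|_1)}$ to the total-mass gap and combining with a Pinsker bound on the normalised probability vectors $u/\|u\|_1$ and $v/\|v\|_1$).
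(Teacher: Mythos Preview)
Your approach is quite different from the paper's, and the primary route you outline does not close. The paper's proof is a two-line reduction: since $\rho(ta,tb)=t\rho(a,b)$ for all $t>0$, the divergence $d_\rho$ is $1$-homogeneous, so
\[
d_\rho(u,v)=\|u\|_1\,d_\rho\!\left(\tfrac{u}{\|u\|_1},\tfrac{v}{\|u\|_1}\right),
\qquad
\|u-v\|_1=\|u\|_1\left\|\tfrac{u}{\|u\|_1}-\tfrac{v}{\|u\|_1}\right\|_1.
\]
Now $u/\|u\|_1\in\Delta_n$, so the simplex version of the inequality (Proposition~\ref{prop:asiha7shd}, quoted from \cite{AltschulerWR:2017}) applies directly and yields the constant $7$ with no further work. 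There is no pointwise estimate, no Cauchy--Schwarz, and no quadratic.

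Your main line (pointwise bound $+$ Cauchy--Schwarz $+$ quadratic) genuinely cannot deliver the constant $7$. From $L^2\le 2D(S+L)$ alone one obtains $L\le D+\sqrt{D^2+2SD}$, and the ratio $L^2/(SD)$ is then of order $D/S$ as $D/S\to\infty$; it is unbounded, not merely larger than $7$. In particular your claimed constant $4+2\sqrt{3}$ does not follow from $L\le 2D+\sqrt{2SD}$ either, since $(2D+\sqrt{2SD})^2/(SD)=4D/S+4\sqrt{2D/S}+2$ also diverges. No case split on the quadratic alone can repair this, because the quadratic has already discarded the structure that keeps $L$ small when $D\gg S$. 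The device you mention parenthetically at the very end---passing to the normalised vectors $u/\|u\|_1$ and using a Pinsker bound there---is exactly the paper's argument, and it should be the whole proof rather than a fallback.
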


Lemma~\ref{lem:pinskergen} is a generalization of the next proposition from~\cite{AltschulerWR:2017}.

\begin{proposition}\label{prop:asiha7shd}
For $v\in \Delta_n$ and $u \in R^{n}_{+}$ we have that
\begin{equation}\label{eq:aiod2hd8o}
	||v -u||_1 \leq \sqrt{7 d_\rho(v,u)} 
 \end{equation}
\end{proposition}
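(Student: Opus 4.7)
The plan is to mimic the classical proof of Pinsker's inequality: establish a pointwise lower bound of the form $\rho(a,b) \geq (a-b)^2/(\alpha a + \beta b)$ for suitable constants $\alpha,\beta$, and then convert it into an $\ell_1$ bound via Cauchy--Schwarz with matching weights.

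For the pointwise step, I would exploit the homogeneity $\rho(a,b) = a\, h(b/a)$, where $h(t) = t - 1 - \log t$, to reduce the two-variable inequality to the one-variable statement $h(t) \geq (1-t)^2/(\alpha + \beta t)$ on $t > 0$. Both sides vanish to second order at $t=1$, so after Taylor-expanding at $t=1$ and comparing derivatives on either side of $1$, the verification reduces to a routine calculus exercise for an appropriate choice of constants (for instance $\alpha = 3$, $\beta = 4$, giving $\alpha + \beta = 7$).

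For the Cauchy--Schwarz step, I would write
\[
\|v-u\|_1^2 \;=\; \Bigl(\sum_i \sqrt{\alpha v_i + \beta u_i}\cdot\frac{|v_i - u_i|}{\sqrt{\alpha v_i + \beta u_i}}\Bigr)^{\!2} \;\leq\; \Bigl(\sum_i (\alpha v_i + \beta u_i)\Bigr)\sum_i \frac{(v_i - u_i)^2}{\alpha v_i + \beta u_i}.
\]
The second factor is bounded by $d_\rho(v,u)$ through the pointwise inequality of the previous step, while the first factor equals $\alpha + \beta\|u\|_1$ because $v\in\Delta_n$. Combining the two gives $\|v-u\|_1^2 \leq (\alpha + \beta \|u\|_1)\, d_\rho(v,u)$, and choosing $\alpha + \beta = 7$ recovers the claimed constant in the regime $\|u\|_1 \leq 1$ in which the proposition is applied to Sinkhorn iterates.

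The main technical obstacle is the joint calibration of $\alpha$ and $\beta$: they have to be small enough to keep the prefactor $\alpha + \beta \|u\|_1$ at most $7$, yet large enough that the pointwise inequality $h(t) \geq (1-t)^2/(\alpha + \beta t)$ still holds globally on $t>0$. Once this scalar inequality is verified with the right constants, everything else is a direct application of Cauchy--Schwarz together with the normalization $v \in \Delta_n$.
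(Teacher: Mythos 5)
Your argument is correct, and it is worth noting that the paper gives no proof of this proposition at all: it is imported from \cite{AltschulerWR:2017}, and only the rescaling step (Lemma~\ref{lem:pinskergen}) is proved here. Your route --- the pointwise bound $\rho(a,b)\ge (a-b)^2/(3a+4b)$ obtained from the scalar inequality $t-1-\log t\ge (1-t)^2/(3+4t)$, followed by Cauchy--Schwarz with weights $3v_i+4u_i$ --- is the standard Pinsker-type argument, and the calculus step you defer does check out: with $f(t)=(t-1-\log t)(3+4t)-(1-t)^2$ one finds $f(1)=f'(1)=0$ and $f''(t)=(6t^2-4t+3)/t^2>0$ on $(0,\infty)$, so $f\ge0$ everywhere. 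Your proof is therefore a valid, self-contained replacement for the citation.

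One point deserves emphasis. Your method yields $\norm{v-u}_1^2\le(3\norm{v}_1+4\norm{u}_1)\,d_\rho(v,u)$, hence the constant $7$ only under the extra hypothesis $\norm{u}_1\le1$, which you flag. This restriction is not an artifact of your approach: the proposition as literally stated, for arbitrary $u\in\R^n_+$, is false. Take $n=1$, $v=1$, $u=M$; then $\norm{v-u}_1=M-1$ grows linearly in $M$ while $\sqrt{7\,d_\rho(v,u)}=\sqrt{7(M-1-\log M)}$ grows only like $\sqrt{M}$. So some bound on $\norm{u}_1$ is genuinely needed. Fortunately it holds where the result is used: Lemma~\ref{lem:pinskergen} is invoked in Theorem~\ref{theo:genconv} with $v=(r,c)$ and $u=(r(A^k),c(A^k))$, whose $\ell_1$ norms coincide once the iterates have total mass one, so the normalized vector fed into Proposition~\ref{prop:asiha7shd} has unit norm. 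The clean fix is to state the proposition with the additional hypothesis $\norm{u}_1\le\norm{v}_1=1$, under which your proof is complete.
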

Using Proposition~\ref{prop:asiha7shd} we will now prove Lemma~\ref{lem:pinskergen}.
 
\begin{proof}
First we note that $\rho$ is a $1-$homogeneous function, that is
\begin{eqnarray*}
	\rho(tx,ty) &= &ty - tx + tx \log{\frac{tx}{ty}} \\
	&= &t(y - x + x\log{\frac{x}{y}}) \\
	&= &t \rho(x,y) \quad \forall x,y,t > 0.
\end{eqnarray*}
This implies immediatly that $d_\rho$ is $1-$homogeneous as well. Therefore for $v,u \in R^{n}_{+}$ and we have that
\begin{eqnarray*}
	d_\rho(v,u) &= &\norm{v}_1 d _\rho(\frac{v}{\norm{v}_1},\frac{u}{\norm{v}_1}). \end{eqnarray*}
	Furthermore
\begin{eqnarray*}	
	\norm{v - u}_1 &= &\norm{v}_1\norm{\frac{v}{\norm{v}_1}-\frac{u}{\norm{v}_1}}_1 \\
	&\overset{\eqref{eq:aiod2hd8o}}{\leq} &\norm{v}_1 \sqrt{7 d_\rho(\frac{v}{\norm{v}_1},\frac{u}{\norm{v}_1})}. 
\end{eqnarray*}
Consequently
\begin{eqnarray*}	
	\norm{v - u}_1 &\leq &\sqrt{7\norm{v}_1 d_\rho(v,u)}.
\end{eqnarray*}
which concludes the proof. $\qed$
\end{proof}

\subsection{Proof of Theorem~\ref{theo:genconv}}

\begin{proof}
Let $D_k \eqdef \E{dist(A^k,U_{r,c})} $ and let $k^* \in \N$ be an integer such that $D_k > \epsilon$ for all $k < k^*$ (in other terms, an index such that the algorithm has not converged yet at the corresponding iteration).

Recall that $\rho(A^k)$ is the vector of all $2n$ marginal violations for the matrix $A^k$, as defined in~\eqref{eq:marg_viol}. We will write its components as $\rho_i(A^k)$ for a given index $i$. Recall that $\Psi(\rho(A^k))$ is the vector of probabilities of picking each row and column, and similarly we will write its components $\Psi_i(\rho(A^k))$, which is then the probability of picking index $i$.
We start the proof by showing that $D_k^2$ is upper bounded by the following conditional expectation
\begin{eqnarray*}
	\E{\rho_I(A^k) \mid A^k }  &= &\sum_{i=1}^{2n} \Psi_i(\rho(A^k))  \rho_i(A^k),
\end{eqnarray*}
where $I$ is the index randomly sampled at iteration $k$.
Let $k < k^*$ and since we assume that~\eqref{def:monoprobafunc} holds for some function $g$, we have that
\begin{align}
\E{\rho_I(A^k) \mid A^k } &= \sum_{i=1}^{2n} \frac{g(\rho_i(A^k))}{\sum_{j=1}^{2n}g(\rho_j(A^k))}  \rho_i(A^k) \nonumber \\
& \overset{\eqref{eq:ineqcheb}}{\geq }  \frac{1}{n}\sum_{i=1}^{2n} \rho_i(A^k) \nonumber \\
& \overset{\eqref{eq:pinskergen}}{\geq}  \frac{\left(\norm{r-r(A^k)}_1 + \norm{c- c(A^k)}_1\right)^2 }{28n}, \label{eq:andh23nun}
\end{align}
where we applied Lemma~\ref{lem:cheby} in the first inequality which relies on the monotonicity of $g$, and  the generalized Pinsker inequality~\eqref{eq:pinskergen} in the second inequality with $a = (r,c)$, $b = (r(A^k),c(A^k))$ and we used that $\norm{a}_1 = \norm{r}_1+\norm{c}_1 =2.$ Taking expectation in~\eqref{eq:andh23nun}, using the law of total expectation and the fact that $\E{X^2} \geq \E{X}^2$ for any random variable $X$ gives
\begin{eqnarray}
  \E{\rho_I(A^k)} & \overset{\eqref{eq:andh23nun}}{\geq} &
\frac{\E{\norm{r-r(A^k)}_1 + \norm{c- c(A^k)}_1}^2}{28n} \nonumber\\
& =& \frac{1}{28n} D_k^2 
\quad > \quad  \frac{\epsilon^2}{28n}.  \label{eq:whoah3a}
\end{eqnarray}
To conclude, we now use Lemma~\ref{lem:altschuler} to re-write $\E{\rho_I(A^k) \mid A^k }$ as
\begin{eqnarray}
\E{f(x^{k},y^{k}) - f(x^{k+1},y^{k+1}) \mid x^{k},y^{k}} \qquad \qquad && \nonumber \\
\quad = \sum_{i=1}^{2n} \Psi_i(\rho(A^k))  \rho_i(A^k) %&& \nonumber \\
= \E{\rho_I(A^k) \mid A^k }.&&\label{eq:asnohas}
\end{eqnarray}
Thus taking expectation in~\eqref{eq:asnohas} gives
\begin{eqnarray}
\E{f(x^{k},y^{k}) - f(x^{k+1},y^{k+1})  } &= &\E{\rho_I(A^k)} \nonumber  \\ 
&\overset{\eqref{eq:whoah3a}}{>} &\frac{\epsilon^2}{28n}.
\label{eq:asnoha22s}
\end{eqnarray}
	 Summing over $k=0,\ldots, k^*-1$ in~\eqref{eq:asnoha22s} and using telescopic cancellation we have that
	\begin{equation}
f(x^{0},y^{0}) - \E{f(x^{k^*},y^{k^*}) } > \frac{k^* \epsilon^2}{28 n}.
\label{eq:ssdsdsiij}
\end{equation}
Combining the above with 
\[f(0,0) - \E{f(x^k,y^k)} \leq\log\left(\frac{s}{\ell}\right), \]
 as proven in Lemma~\ref{lem:fzerosl}, we have that
\begin{equation}
\frac{28 n}{\epsilon^2}log\left(\frac{s}{\ell}\right) > k^*.
\end{equation}

This proves that for a given integer $k^*$
\begin{equation}
 \forall k < k^*, \quad D_k> \epsilon, \quad
\Rightarrow \quad \frac{28 n}{\epsilon^2}log\left(\frac{s}{\ell}\right) > k^*.
\end{equation}
The contrapositive of the above statement is given by
\begin{equation}
k^* \geq \frac{28 n}{\epsilon^2}log\left(\frac{s}{\ell}\right) \Rightarrow \exists k < k^*, \quad D_k \leq \epsilon. \label{eq:last}
\end{equation}
Choosing $k^* = \lceil \frac{28 n}{\epsilon^2}log\left(\frac{s}{\ell}\right) \rceil$ 
%To conclude, we read~\eqref{eq:last} with $k^*$ the smallest integer that verifies $k^* \geq \frac{28 n}{\epsilon^2}log\left(\frac{s}{\ell}\right)$, so that the index $k < k^*$ that verifies the right handside part of~\eqref{eq:last} is smaller than $\frac{28 n}{\epsilon^2}log\left(\frac{s}{\ell}\right)$,  which
 concludes the proof.
$\qed$
\end{proof}
   
\begin{corollary}
If we choose $\Psi$ as either~\eqref{eq:uni},~\eqref{eq:polyn} or~\eqref{eq:softmax} then the Greedy Stochastic Sinkhorn algorithm converges at a $O(\frac{n}{\epsilon^2})$ rate according to Theorem~\ref{theo:genconv}.
\end{corollary}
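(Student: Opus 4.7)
The plan is straightforward: Theorem~\ref{theo:genconv} already delivers an $O(n/\epsilon^2)$ iteration bound for any increasing probability function of the form prescribed by~\eqref{def:monoprobafunc}, so the corollary reduces to checking that each of the three specific choices~\eqref{eq:uni},~\eqref{eq:polyn}, and~\eqref{eq:softmax} fits that template. The only real work is exhibiting, in each case, an increasing positive function $g : \R_+ \to \R_+$ whose per-coordinate normalization reproduces $\Psi$.

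For the polynomial case~\eqref{eq:polyn} I would set $g(t) = t^\alpha$, which is strictly increasing on $\R_+$ for $\alpha > 0$, and substituting it into~\eqref{def:monoprobafunc} gives exactly~\eqref{eq:polyn}. For the softmax case~\eqref{eq:softmax} I would set $g(t) = e^{t/T}$, which is likewise strictly increasing for $T > 0$ and reproduces the formula after normalization. For the uniform case~\eqref{eq:uni} I would take $g$ to be any positive constant, so that $g(h_k) / \sum_{i=1}^{2n} g(h_i) = 1/(2n)$ identically.

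The main (and only) obstacle is the uniform case, since a constant function is only non-decreasing rather than strictly increasing. To handle this I would point back to the proof of Theorem~\ref{theo:genconv}: monotonicity of $g$ is used exclusively through Chebyshev's inequality (Lemma~\ref{lem:cheby}), which remains valid whenever the two sequences are non-decreasing, and trivially so when one of them is constant. Hence the qualifier "increasing" in~\eqref{def:monoprobafunc} should be read in the non-strict sense, and all three candidate probability functions qualify. Once this is settled, the corollary follows from Theorem~\ref{theo:genconv} by direct substitution, yielding an iteration bound of $\lceil 28 n \epsilon^{-2} \log(s/\ell) \rceil$, which is $O(n/\epsilon^2)$.
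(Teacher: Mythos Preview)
Your proposal is correct and follows essentially the same route as the paper: identify the underlying $g$ for each of the three choices and invoke Theorem~\ref{theo:genconv}. Your extra paragraph clarifying that ``increasing'' must be read non-strictly (so that the constant $g$ in the uniform case qualifies, with Chebyshev's inequality still going through) is more careful than the paper, which simply lists $x \mapsto 1$ among the ``positive increasing'' functions without comment.
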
   
   \begin{proof}
   The proof follows by observing that~\eqref{eq:uni},~\eqref{eq:polyn} or~\eqref{eq:softmax} are increasing probability functions. That is, the functions $x \mapsto e^{x/T}$, $x \mapsto x^\alpha$ and $x \mapsto 1$ are positive increasing real-valued functions.
   \end{proof}

\subsection{Variance convergence}\label{sec:var}
In this section we prove that the variance of the Greedy Stochastic Sinkhorn algorithm also converges to zero, and furthermore, we show that the rate of convergence is $O(n/\epsilon)$ to reach a variance of $\epsilon>0.$ Having already established the convergence of the Greedy Stochastic Sinkhorn algorithm in expectation and high probability, together with the following convergence of the variance, we can conclude that the convergence is qualitatively almost deterministic. 

% the stochasticity of our algorithm is highly controlled. Indeed, qualitatively,
% Numerical experiments provided empirical evidence that the variance converges to $0$ as well. In fact, we are able to prove this and give a convergence rate.

\begin{theorem}
Consider the sequence of matrices $A^k \eqdef D(u^k)AD(v^k)$ produced by Stochastin Sinkhorn Algorithm with an increasing probability function $\Psi$.
Then for a given $\epsilon>0,$ we have that
\[\exists k \in \N, \quad k \leq \frac{28n}{\epsilon}\log\left(\frac{s}{\ell}\right),\]
such that $\mathrm{Var} (dist(A^k,U_{r,c}) ) \leq \epsilon.$
\end{theorem}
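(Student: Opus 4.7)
The plan is to adapt the argument from the proof of Theorem~\ref{theo:genconv} but to work with the second moment $\E{dist(A^k,U_{r,c})^2}$ instead of the square of the first moment. First I would revisit the key inequality~\eqref{eq:andh23nun}, which actually provides a pointwise (almost-sure) bound $\E{\rho_I(A^k)\mid A^k}\geq dist(A^k,U_{r,c})^2/(28n)$. Taking the outer expectation of this conditional bound turns the right-hand side into $\E{dist(A^k,U_{r,c})^2}/(28n)$ directly, without incurring the Jensen loss $\E{X^2}\geq\E{X}^2$ that was used in the passage to~\eqref{eq:whoah3a}. Combining this with Lemma~\ref{lem:altschuler} and the tower property yields the per-iteration progress inequality
\[
\E{f(x^k,y^k) - f(x^{k+1},y^{k+1})} \;\geq\; \frac{\E{dist(A^k,U_{r,c})^2}}{28n}.
\]

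Next I would invoke the elementary identity $\mathrm{Var}(X) = \E{X^2} - \E{X}^2 \leq \E{X^2}$. Hence, assuming that $\mathrm{Var}(dist(A^k,U_{r,c})) > \epsilon$ for every $k<k^*$ forces $\E{dist(A^k,U_{r,c})^2} > \epsilon$ for those iterations, so that each per-iteration expected decrement in the dual objective is at least $\epsilon/(28n)$. Telescoping the inequality above from $k=0$ to $k^*-1$ and invoking Lemma~\ref{lem:fzerosl}, which gives $f(0,0) - \E{f(x^{k^*},y^{k^*})} \leq \log(s/\ell)$, then yields $k^* \leq 28n\log(s/\ell)/\epsilon$. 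Taking the contrapositive, exactly as in the closing argument of Theorem~\ref{theo:genconv}, establishes the existence of some $k \leq \lceil 28n\log(s/\ell)/\epsilon \rceil$ with $\mathrm{Var}(dist(A^k,U_{r,c})) \leq \epsilon$.

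I do not anticipate a hard step here; the only conceptual subtlety is the recognition that the proof of Theorem~\ref{theo:genconv} threw away a square through Jensen's inequality, and that preserving the second moment gives the improved $\epsilon^{-1}$ dependence for the variance at essentially no additional cost. This is consistent with the informal claim that Greedy Stochastic Sinkhorn behaves qualitatively almost deterministically: the variance shrinks at a faster rate than the expectation itself.
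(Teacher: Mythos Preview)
Your proposal is correct and follows essentially the same approach as the paper: bound the variance by the second moment, observe that the conditional inequality~\eqref{eq:andh23nun} already delivers $\E{\rho_I(A^k)}\geq \E{dist(A^k,U_{r,c})^2}/(28n)$ without the Jensen loss, and then rerun the telescoping-plus-contrapositive argument of Theorem~\ref{theo:genconv} with $\epsilon$ in place of $\epsilon^2$. The only cosmetic difference is that the paper phrases the contradiction hypothesis in terms of the second moment $D'_k>\epsilon$ and deduces the variance bound afterward, whereas you assume the variance exceeds $\epsilon$ and immediately pass to the second moment; the content is identical.
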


\begin{proof}
First we note that
\begin{equation*}
0 \leq \mathrm{Var} (dist(A^k,U_{r,c}) ) \leq \E{dist(A^k,U_{r,c})^2}.
\end{equation*}
Therefore we will prove that $\E{dist(A^k,U_{r,c})^2}$ converges at a certain rate, which will give the same thing for the variance.
\newline
The proof is extremely similar to the proof of Theorem 2, therefore we will only highlight where we have slight differences. Now we call $D'_k = \E{dist(A^k,U_{r,c})^2}$ and consider an integer $k^*$ such that $\forall k < k^*, D'_k > \epsilon$ where $\epsilon > 0$ is fixed.
\newline
The only modification of the proof is that now we will have
\begin{eqnarray*}
  \E{\rho_I(A^k)} &\geq &
\frac{\E{dist(A^k,U_{r,c})^2}}{28n} \nonumber\\
& =& \frac{1}{28n} D'_k \nonumber \\
&\geq  &\frac{\epsilon}{28n}. 
\end{eqnarray*}
Note that in the proof of Theorem 2 we had a $\epsilon^2$ in place of $\epsilon$ in the above lower bound. Taking this minor difference into account, the rest of the proof follows verbatim to the proof of Theorem 2. \qed

\end{proof}

\section{Numerical experiments}\label{sec:numerics}
In this section we provide some empirical insights into the behaviour of the Greedy Stochastic Sinkhorn Algorithm~\ref{alg:StochSink}. We consider both real and synthetic datasets: MNIST digits and random histograms. The authors of both~\cite{Cuturi2013} and~\cite{AltschulerWR:2017} provided numerical experiments where Sinkhorn and Greenkhorn perform considerably better than other Optimal Transport algorithms. We will show how the Greedy Stochastic Sinkhorn has a similar efficiency for monotonic probability functions~\eqref{def:monoprobafunc}. In particular, we will show that for~\eqref{eq:polyn} with $\alpha = 1$, Greedy Stochastic Sinkhorn outperforms Greenkhorn in the short-term for regimes of small penalization. Finally we will discuss the computational properties of the three algorithms, in particular some drawbacks of Greenkhorn and Greedy Stochastic Sinkhorn in comparaison with Sinkhorn, and give insights on how to bypass them. But first, we explicity show how Greenkhorn and Greedy Stochastic Sinkorn have linear iteration complexities.
\subsection{Updating the marginal violation}\label{sec:complexiter}
The Greenkhorn Algorithm~\ref{alg:greenkhorn} and the Greedy Stochastic Sinkhorn
Algorithm~\ref{alg:StochSink} must re-compute marginal violations 
\[\rho(A^k) = \left[  \rho(r_i,r_i(A^k)) _{i=1..n} 
, \, \rho(c_i,c_i(A^k))_{i=1..n}  \right],\]
at each iteration. Calculating $\rho(A^k)$ from scratch at each iteration would cost $O(n^2)$, which would defeat the purpose of both algorithms of having a linear
iteration complexity. Fortunately $\rho(A^k)$ can be updated on the fly with only $O(n)$ operations. To see this, suppose we have stored  $c(A^k), r(A^k)$ and $\rho(A^k)$ and now we wish to calculate $\rho(A^{k+1}).$
Suppose we sample an index $I$ in Algorithm~\ref{alg:StochSink} such that $I\in \{1,\ldots, n \}$, consequently we update
\begin{equation}\label{eq:uindexk}
u_{I}^{k+1} = r_{I}./(Av^k)_{I} \, ,
 \end{equation}
%\begin{equation}\label{eq:uindexk}
%u_{i}^{k+1} =
%\begin{cases}
% r_{I}./(Av^k)_{I} & \mbox{if  } i =I,\\
%  u_i^{k} & \mbox{if  } i \neq I,
%\end{cases} 
% \end{equation}
while $v^{k+1} =v^k$ and $u_i^{k+1} = u^k_i$ for $i \neq I$ remain unaltered. We can thus calculate the $i$th component of $r(A^{k+1})$ via
\begin{align*}
 r_i(A^{k+1}) &= (D(u^{k+1})AD(v^{k}){\bf 1})_i =
u^{k+1}_i A_{i:}v^k\\
&\overset{\eqref{eq:uindexk}}{=} 
\begin{cases}
u^{k+1}_I A_{I:}v^k & \mbox{if  } i =I,\\
r_i(A^{k}) & \mbox{if  } i \neq I.
\end{cases}
\end{align*}
The column sum vector can be updated using
\begin{align*}
c(A^{k+1}) &= D(v^k) \sum_{i=1}^n A_{i:} u_i^{k+1}\\
%&= D(v^k)  A_{I:} u_I^{k+1} -D(v^k)  A_{i:} u_i^{k} +D(v^k) \sum_{i=1}^n A_{i:} u_i^{k+1}\\
&= D(v^k)  A_{I:} u_I^{k+1} -D(v^k)  A_{i:} u_i^{k} + c(A^k).
\end{align*}
Thus both $r(A^{k+1})$ and $c(A^{k+1})$ can be updated using $O(n)$ operations. Since $\rho(r_i,r_i(A^{k+1})) = \rho(r_i,r_i(A^k))$ for $i \neq I$, only $n+1$ components of $\rho(A^{k+1})$ need to re-computed, which costs $O(n)$ operations.
The $O(n)$ cost of the case where $I \in \{n+1, \ldots 2n\}$ can be deduced in an   analogous way.

\subsection{Experiments}

We perform experiments on MNIST dataset. We take pairs of elements from the $28\times 28$ pixels MNIST dataset, that we then vectorize into 1D arrays $r$ and $c$ (in the sense that, for example, the 2 by 2 matrix $((1,2),(3,4))$ becomes the vector $(1,2,3,4)$ ).  The cost matrix $C$ is then constructed so that $C_{ij}$ equals the $\ell_1$ distance between pixels $i$ and $j$ in the $28\times 28$ grid. 
We then apply the Sinkhorn, Greenkhorn and Greedy Stochastic Sinkhorn algorithms to compute a diagonal scaling of $A = e^{-\lambda C}$.  This process is then repeated $20$ times, for each time we randomly sample a pair of images from the MNIST dataset. Finally, we report the average performance of the algorithms over these $20$ experiments.

%We perform experiments on MNIST dataset. We take two matrices from the $28\times 28$ pixels MNIST dataset, and calculate $r$ and $c$ by vectorizing these two matrices.
%The cost matrix $C$ is then calculated via $C_{ij} = |c_i-r_j|$ for every $i,j \in \{1,\ldots, 28 \times 28\}.$
%We then apply the Sinkhorn, Greenkhorn and Stochastic Sinkhorn algorithms to compute a diagonal scaling of $A = e^{-\lambda C}$. This process is then repeated $20$ times 
%by randomly sampling the two matrices from the MNIST dataset. Finally, we report the average performance of the algorithms over these $20$ experiments. 

    The choice of $\lambda$ defines the penalization, and we highlight the fact that regimes of low penalization, corresponding to higher values of $\lambda$~\eqref{eq:regOT}, are of particular interest since they change the least the solution of the original non-regularized problem~\eqref{eq:nonregOT}. For this setting, Greedy Stochastic Sinkhorn with ~\eqref{eq:polyn} for $\alpha = 1$ is clearly the best choice overall, see Figure~\ref{fig:1}.
\newcommand\x{0.41} % I use this to squeeze the size of figures
 \begin{figure}[h]
 \centering
  \includegraphics[width=\x\textwidth]{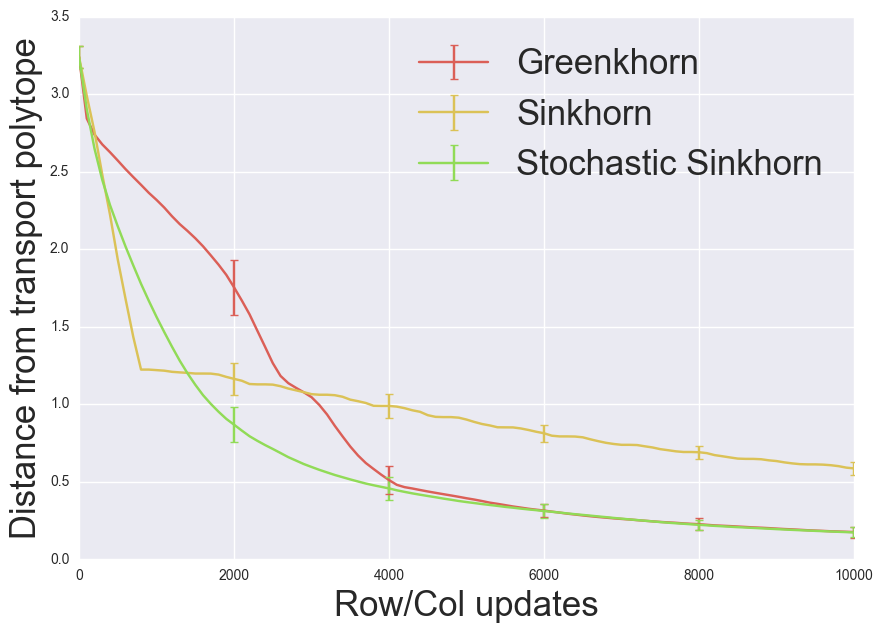}
 \caption{Evolution of distance from transport polytope for Sinkhorn, Greenkhorn and Greedy Stochastic Sinkhorn  (~\eqref{eq:polyn} with $\alpha=1$) in regimes of low penalization ($\lambda = 10$) on MNIST dataset. For the x-axis, one should read ``number of row and column updates'' in the sense that one iteration on the x-axis represents one update of a row or a column.}
 \label{fig:1}
 \end{figure}
 
We also compared in Figure~\ref{fig:limitingcase} the Greedy Stochastic Sinkhorn for various choices of parameters. As expected, using the probability function~\eqref{eq:polyn} with $\alpha \rightarrow + \infty$ or~\eqref{eq:softmax} with $T \rightarrow 0$, the Greedy Stochastic Sinkhorn algorithm reduces to the Greenkhorn algorithm
%Indeed, Greenkhorn is a limit case of the increasing probability functions~\eqref{def:monoprobafunc} so it is no surprise that we obtained identical empirical convergence rates. %% <-- I guess we've said this a few times befores already

 Notice also that the standard deviation for Greedy Stochastic Sinkhorn (represented as errorbars) tends to $0$, which is a very important property because of the stochastic nature of the algorithms. In fact, this means that not only the expectation of the distance tends to zero, but also the variance, a fact which we prove in the appendix.

\begin{figure}[h]
\centering
\begin{subfigure}[h]{\x\textwidth}
   \includegraphics[width=1\textwidth]{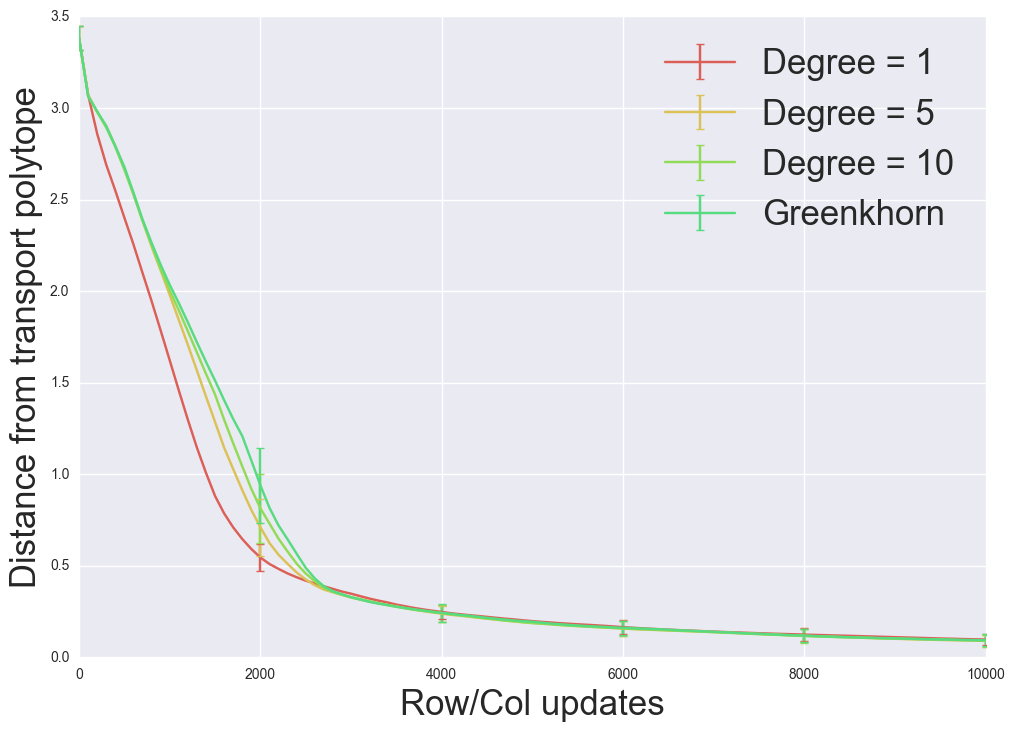}
   \label{fig:polyn} 
\end{subfigure}

\begin{subfigure}[h]{\x\textwidth}
   \includegraphics[width=1\textwidth]{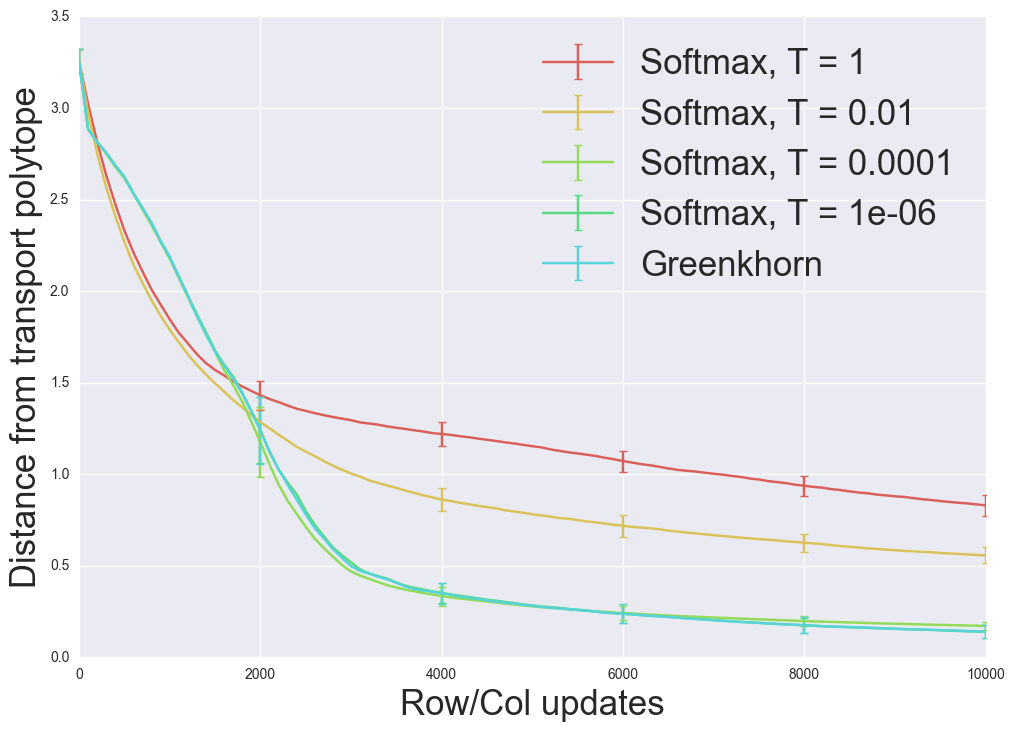}
\label{fig:softmax}
\end{subfigure}
\caption{Greedy Stochastic Sinkhorn with different probability functions, and Greenkhorn as limiting case. Up: polynomial probabilities~\eqref{eq:polyn}, down: softmax probabilities~\eqref{eq:softmax}. For the x-axis, one should read ``number of row and column updates'' in the sense that one iteration on the x-axis represents one update of a row or a column. } \label{fig:limitingcase}
\end{figure}

\subsection{Discussion and block algorithms}
While Greedy Stochastic Sinkhorn and Greenkhorn empirically perform better than Sinkhorn, they also have two computational drawbacks: firstly they are not parallelizable. In fact, each iteration of Sinkhorn~\eqref{eq:Sinkh} is a rescaling of $u$ and $v$ involving a matrix-vector product that can be parallelized. Greenkhorn and Greedy Stochastic Sinkhorn update only one element per iteration which does not involve a similar product that we can parallelize. Secondly, the greedy algorithms compute at each iteration the marginal violations, which, despite only costing $O(n)$, it does represent an additional computational cost per iteration. 
\newline

One solution to these issues is to re-compute these marginal violations only once every $d$ iterations: for example in Greenkhorn we compute marginal violations $\rho(A^k)$ and we update not only the index of highest value, but the $d$ indexes of highest values. Something similar can be done in Greedy Stochastic Sinkhorn by sampling $d$ indexes without replacement instead of just one. By doing so, on the one hand we reduce computation time for computing the marginal violations by a factor $d$, and on the other hand the algorithms described are now parallelizable because updating $d$ components of $u$ and $v$ does involve a matrix-vector product.
 
 \begin{figure}[h]
 \centering
  \includegraphics[width=\x\textwidth]{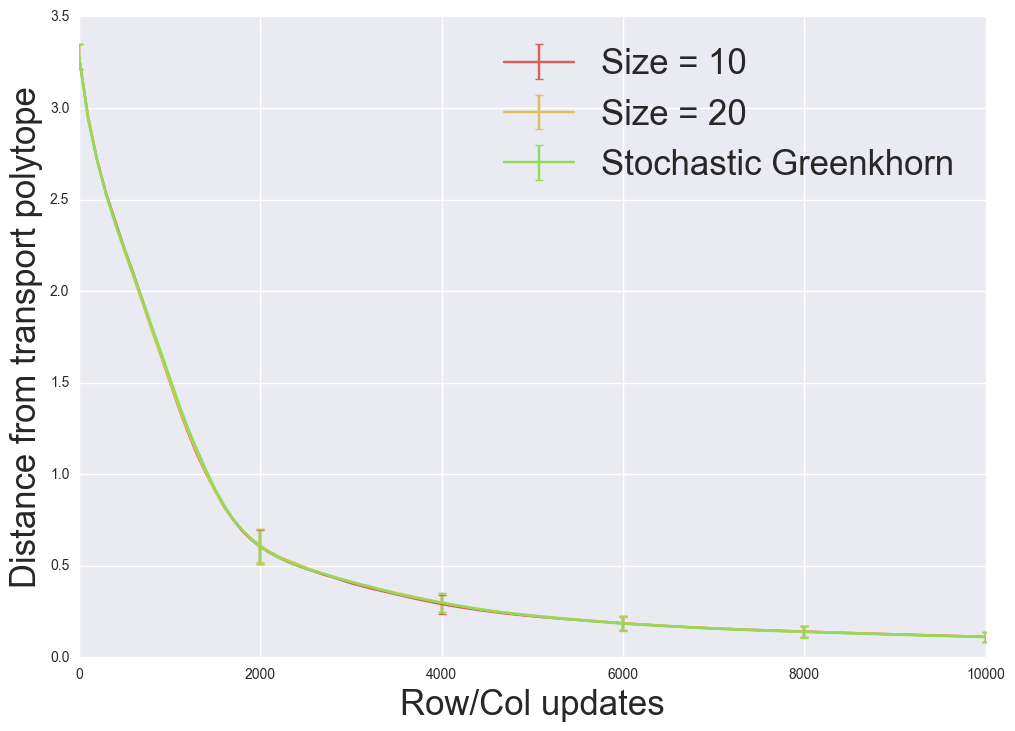}
 \caption{Evolution of distance from transport polytope for Block Greedy Stochastic Sinkhorn compared to Greedy Stochastic Sinkhorn (~\eqref{eq:polyn} with $\alpha=1$). For the x-axis, one should read ``number of row and column updates'' in the sense that one iteration on the x-axis represents one update of a row or a column.} \label{fig:block}
 \end{figure}

This idea is motivated by the numerical results of~\citeauthor{AltschulerWR:2017} where the authors concluded that the efficiency of Greenkhorn is mainly due to the fact that is does not update rows and columns that already match desired sums, more than the fact that it updates indexes with highest marginal violations. This means that the procedure of updating $d$ indexes instead of just one is expected to have a similar efficiency, which is indeed the result we get in our own numerical experiments as shown in Figure~\ref{fig:block}.

\section{Conclusion}
We presented  a family of stochastic algorithms for entropy-regularized OT problems. We were able to derive convergence rates for a very broad class of probability functions, along with numerical experiments where a simple and intuitive choice of probability functions performed the best. 
We also proposed and tested simple numerical solutions to the drawbacks of the greedy algorithms. 
%A next line of research could investigate other probability functions that may yield better performance, in general or particular regimes of regularization.

\subsubsection*{Acknowledgements}
Both authors are indebted to Marco Cuturi for essentially teaching both of them about OT and many inspiring discussions.
RMG acknowledges the support of the FSMP postdoctoral fund.  BKA carried our this work while at an internship, and acknowledges CREST/ENSAE for funding, and the SIERRA team, Inria Paris, for hosting.

\printbibliography

\end{document}